\documentclass[letterpaper, 10 pt, conference]{ieeeconf}
\IEEEoverridecommandlockouts                              
\overrideIEEEmargins
\usepackage{paralist,subcaption}
\usepackage[pdftex]{graphicx}
\usepackage{graphicx,amsmath,amsfonts,comment,algorithmic,algorithm,amssymb, color, framed, soul}
\usepackage{pgfplots} 
\pgfplotsset{compat = newest}
\usepackage{tikz}
\usepackage{pgfgantt}
\usetikzlibrary{shapes,arrows,positioning,matrix,chains,fit}
\usetikzlibrary{backgrounds}

\newtheorem{theorem}{Theorem}

\title{\LARGE \bf Reinforcement Learning for Task Specifications with Action-Constraints}

\author{Arun Raman$^{1}$, Keerthan Shagrithaya$^{1}$, Shalabh Bhatnagar$^{1}$
\thanks{*The work of the first author was supported by the C. V. Raman postdoctoral fellowship. The work of the second and third authors was supported by the J. C. Bose fellowship of the third author.}
\thanks{$^{1}$ Department of Computer Science and Automation, Indian Institute of Science, Bangalore, India
        {\tt\small\{arunraman, keerthans, shalabh\}@iisc.ac.in}}%
}
    
\begin{document}

\maketitle
\begin{abstract}

In this paper, we use concepts from supervisory control theory of discrete event systems to propose a method to learn optimal control policies for a finite-state Markov Decision Process (MDP) in which (only) certain sequences of actions are deemed unsafe (respectively safe). We assume that the set of action sequences that are deemed unsafe and/or safe are given in terms of a finite-state automaton; and propose a supervisor that disables a subset of actions at every state of the MDP so that the constraints on action sequence are satisfied. Then we present a version of the $Q$-learning algorithm for learning optimal policies in the presence of non-Markovian action-sequence and state constraints, where we use the development of reward machines to handle the state constraints. We illustrate the method using an example that captures the utility of automata-based methods for non-Markovian state and action specifications for reinforcement learning and show the results of simulations in this setting. 

\end{abstract}

\section{Introduction}\label{intro}
In reinforcement learning (RL) \cite{sutton2018reinforcement}, an agent evaluates its optimal behaviour from experience gained through interaction with the environment which is guided by the rewards that an action at a given state fetches from it. Safe reinforcement learning is a paradigm wherein the idea is to not violate a set of constraints during the process of learning \cite{garcia2015comprehensive}. In many applications, the constraints may be given in terms of the sequences of actions. For example, consider an agent, that can move in the four cardinal directions, traversing in a grid-like warehouse environment. In this case, the safety requirement corresponding to one-way traffic can have a specification of not taking two consecutive right turns which will result in a U-turn. 
In several scheduling applications, if scheduling an item is interpreted as an action, then the constraints on the sequence of actions arise naturally. For example, in real-time scheduling of home appliances, the dryer cannot be scheduled before the washer \cite{sou2011scheduling, kaur2021machine}. In this paper, we are interested in a method for obtaining optimal control policies for such action-constrained scenarios.

Oftentimes, the system states and constraints are affected by uncontrollable events. For example, some actuators of a robot can break down due to faults. The task specification for an agent should be robust to such uncontrollable events. The subject of uncontrollable events altering the state of the system is well studied in the theory of supervisory control of discrete event systems\cite{ramadge1989control, cassandras2009introduction}. A {\em Discrete Event System} (DES) is a discrete-state system, where the state changes at discrete time instants due to the occurrence of events. If we associate a (not necessarily unique) label to each event of the DES, then its behaviour can be described by the language it generates using the alphabet defined by the set of labels. In such a setting, the desired behaviour of the DES is also specified by a language specification. A \textit{supervisory policy} enforces a desired language specification by disabling a subset of events known as the \textit{controllable events}. The complementary set of events that cannot be disabled by the supervisor are called \textit{uncontrollable events}. It is important to note that the supervisory policy cannot force an event to occur, it can only disable a controllable event. As such, a supervisory policy aims for minimum intervention in the operation of the system--- only when a particular event can lead to the violation of a specification. In this paper, we interpret the uncontrollable events as (uncontrollable) actions which can then be used to specify an overall action-constraint for the system by taking the possibilities of uncontrollable events into account. We limit our analysis to those action-constraints that can be represented by a finite automaton.

The environment in the Reinforcement Learning (RL) paradigm is typically modeled as a Markov Decision Process (MDP). If all the events of a Disrete Event System (DES) are controllable, then there are parallels between supervisory control of a DES and the theory of synthesizing policies for an MDP. In particular, fundamentally, both cases involve solving a sequential decision-making problem in which the state evolution in the model is Markovian in nature. On the other hand, a key difference between the two is that the action space of an MDP does not, generally, consider actions to be uncontrollable. This is because even if such aspects do exist in the system, they can be absorbed in the MDP model by introducing a probability distribution for the next state such that it accounts for the possibility of occurrence of an uncontrollable action. Such an approach works because the objective in the MDP setting is to find an optimal policy that minimizes an expected cost. However, if uncontrollable actions appear in the safety specification, then they cannot be abstracted by a probability distribution anymore and must appear explicitly in the MDP model. In such scenarios, the `safety' part of the safe RL paradigm naturally invites the results from supervisory control theory. Our main contributions in this paper are the following:
\begin{enumerate}
    \item To the best of our information, this paper is the first work that bridges the areas of supervisory control theory (SCT) and reinforcement learning. Such an association permits us to directly use an important theoretical result from SCT: to determine if there exists a policy that satisfies the action-constraints in the presence of uncontrollable actions; offline, before training. If there does not exist such a policy, then there is another important result in SCT that, informally, evaluates the closest to the given constraints that a supervisory policy can enforce. These results are relevant in safe reinforcement learning for complex systems in which correctly formulating the constraints is a task in itself.  
    \item We present a version of Q-learning algorithm to learn optimal control policies for an underlying finite-state MDP in which:
    \begin{enumerate}
        \item a subset of actions might be uncontrollable, and
        \item the task specification or the safety constraints for the system is given in terms of sequences of actions modeled by a finite state automaton. The constraints can be given in terms of sequences that are safe and/or unsafe. This generality simplifies the problem formulation to some extent. 
    \end{enumerate}
    The adapted Q-learning algorithm also integrates the work on reward machines\cite{icarte2018using} that supports reward function specification as a function of state.
    \item Experimental results for an illustrative example that captures the utility of automata-based methods for non-Markovian state and action specifications for reinforcement learning are presented.
\end{enumerate}

The rest of the paper is organized as follows. Section \ref{model} presents some notations and definitions that we will be using in the rest of the paper. It also presents the relevant results from supervisory control theory. Section \ref{aa} discusses supervisor synthesis. We will largely be working with finite automata in this section and use the development for supervisor synthesis to propose a version of $Q$-learning algorithm \cite{watkins1992q} that can be used to learn an optimal policy in the presence of state and action constraints. In Section \ref{example}, we present an example to illustrate the use of automata-based methods to handle non-Markovian state and action-constraints. We use the proposed method for action-constraints and reward machines\cite{icarte2018using} for the state constraints. 
We show the results of the experiments and observe that the Q-learning algorithm adapted to this setting helps to make the agent learn to complete the task optimally 100\% of the time. We conclude the paper with Section \ref{concl}.

\section{Motivation, Notations and Definitions}\label{model}
In a reinforcement learning problem, the environment is modeled as a Markov Decision Process $\mathcal{M} = (S, A, r, p, \gamma)$ where $S$ and $A$ denote the finite set of states and actions, respectively, $r: S \times A \times S \rightarrow \mathbb{R}$ denotes the reward function, $p(s_{t+1}|s_t, a_t)$ is the transition probability distribution and $\gamma \in (0,1)$ is the discount factor. In the paradigm of safe reinforcement learning, the objective is to learn optimal policies while satisfying a set of constraints. In this paper, we consider safety constraints that are given as a set of sequence of actions of the MDP. Next, we discuss the example gridworld in Fig. \ref{fig:my_label1} to motivate the development. 

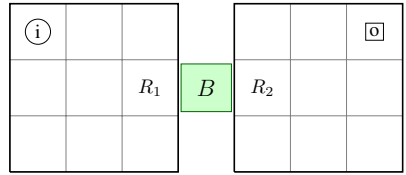
\begin{figure}[htbp]
    \centering
    \resizebox{.3\textwidth}{!}{
    \begin{tikzpicture}
    \draw[gray, very thin] (0, 0) grid (3, 3);
    \draw[gray, very thin] (4, 0) grid (7, 3);
    \draw [thick] (0,0) -- (3,0) -- (3,3) -- (0,3) -- (0,0);
    \draw [thick] (4,0) -- (7,0) -- (7,3) -- (4,3) -- (4,0);
    \node[shape=circle,draw,inner sep=2pt] at (0.5, 2.5) () {i};
    \node[shape=rectangle,inner sep=2pt] at (2.5, 1.5) () {$R_1$};
    
    \node[shape=rectangle,draw,inner sep=2pt] at (6.5, 2.5) () { o};
    \node[shape=rectangle,inner sep=2pt] at (4.5, 1.5) () {$R_2$};
    \node[shape=rectangle,fill=green!20!white, draw=green!40!black,inner sep=8pt] at (3.5, 1.5) () {\large $B$};
    \end{tikzpicture}}
    \caption{A pick-up and delivery grid world}
    \label{fig:my_label1}
\end{figure}
Consider a manufacturing facility layout as shown in Fig. \ref{fig:my_label1}. When a work piece appears at location \textcircled{i}, the robot $R_1$ picks it up and delivers it to the buffer zone $B$. The robot $R_2$ picks the item from $B$ and delivers it to the processing facility \fbox{o}. The buffer $B$ can only store one item at a time; so it has two states Empty($E$) and Full($F$). The robot state is determined by the position that it occupies in the grid and by a binary variable indicating whether it has picked up an item or not. There are six controllable actions for each robot where four of them correspond to their movement in the grid and two correspond to picking and dropping of an item ($p_i, d_i$), $i \in \{1, 2\}$. Additionally, we assume that robot $R_2$ can either be Working(W) or Down(D). There are two uncontrollable actions $\{\phi, \rho\}$ denoting the fault and rectification actions which take $R_2$ from $W$ to $D$ and vice versa respectively. We call them uncontrollable because they are caused by external factors/agents not modeled in this set up. We can also assume that an uncontrollable action makes an item appear in \textcircled{i}. The objective is to design an optimal policy to satisfy the specifications below:
\begin{enumerate}
    \item \label{w} $R_1$ delivers an item to $B$ only if $B$ is in $E$.
    \item $R_2$ picks an item from $B$ only if $B$ is in $F$ (thereby driving $B$ to $E$).
    \item \label{x} $R_1$ and $R_2$ deliver a picked item in no more than $5$ steps.
\end{enumerate}
The above specifications `look' okay but are not robust to uncontrollable actions. To see this, consider a string of actions $\sigma$. Formally, the specification in item \ref{x} says that for a substring $p_1\sigma d_1$, it must be that $|\sigma| \leq 5$ where $|\sigma|$ denotes the size of $\sigma$, and $p_1$ and $d_1$ denote picking and dropping of item by $R_1$ respectively. Consider the scenario in which $R_1$ takes an action $p_1$ when the buffer is full, with the assumption that it will be emptied by $R_2$ in the next step. Such an action is consistent with item \ref{w} of the specification. At this point, if the uncontrollable action $\phi$ occurs, then the satisfaction of item \ref{x} of the specification will entirely depend on the occurrence of the uncontrollable rectification action $\rho$. As such, the specifications above are not robust to uncontrollable actions. That is, the occurrence of an uncontrollable action resulted in the generation of a string of actions that was not in the specification. In what follows, we discuss developments from supervisory control theory which facilitate a formal analysis of such aspects.

An automaton is a 5-tuple $G=(Q, \Sigma, \delta, q_0, Q_m)$\cite{cassandras2009introduction}. Here $Q$ and $\Sigma$ denote the set of \textit{states} and (labelled) \textit{actions} respectively. The \textit{initial state} is $q_0 \in Q$, and $Q_m \subseteq Q$ is called the set of \textit{marked states}. The states can be marked for different reasons and we discuss more about this later in the paper. The function $\delta: \Sigma \times Q \rightarrow Q$ is the \textit{state transition function}, where for $q \in Q, \sigma \in \Sigma, \delta(\sigma, q) = \widehat{q}$, implies there is an action labeled $\sigma$ from state $q$ to state $\widehat{q}$. In general, the state transition function $\delta$ may not be defined for all state-action pairs. The \textit{active action function} $\Gamma: Q \rightarrow 2^{\Sigma}$, identifies the set of all actions $\sigma \in \Sigma$ for which $\delta(\sigma, q)$ is defined. 

Let $\Sigma^*$ denote the set of all finite strings of elements of $\Sigma$ including the empty string $\epsilon$. We write $\delta^*(\omega, q) = \widehat{q}$ if there is a string of actions $\omega \in \Sigma^*$ from $q$ to state $\widehat{q}$, that is if $\widehat{q}$ is reachable from $q$. Given $G$, the set of all admissible strings of actions is denoted by $L(G)$ and is referred to as the \textit{language} generated by $G$ over the alphabet $\Sigma$. Formally, for the initial state $q_0$:
\begin{align}\label{xx}
    L(G) = \{\omega \in \Sigma^*: \delta^*(\omega, q_0) \text{ is defined} \}.
\end{align}
The marked language, $L_m(G) \subseteq L(G)$, generated by $G$ is 
\begin{align}
    L_m(G) = \{\omega \in \Sigma^*: \delta^*(\omega, q_0) \in Q_m \}. \nonumber
\end{align}
A string $u$ is a {\it prefix} of a string $v \in \Sigma^*$ if for some $w \in \Sigma^*$, $v = uw$. If $v$ is an admissible string in $G$, then so are all its prefixes. We define the {\it prefix closure} of $L \subseteq \Sigma^*$ to be the language $\overline{L}$ defined as:
\begin{align}
    \overline{L} = \{ u: uv \in L \text{ for some } v \in \Sigma^* \}. \nonumber
\end{align}


We can use the above notations to interpret the MDP $\mathcal{M}$ as a language generator: $(S, A, \Sigma, \mathcal{L}, r, p, \gamma)$ where $\Sigma$ denotes a set of labels and $\mathcal{L}: A \rightarrow \Sigma$ denotes a labelling function that associates a label to each action. For simplicity, we assume that each action is uniquely labeled and, with some abuse of notation, use $\Sigma$ as a proxy for the set of actions. Then the function $\delta: \Sigma \times S \rightarrow S$ is the \textit{state transition function}, where for $q \in S, \sigma \in \Sigma: \delta(\sigma, q) = \widehat{q}$ means that there is an action $a \in A$ labeled $\sigma$ such that $p(\widehat{q}|q, a) > 0$. The language generated can be defined appropriately. 

A policy $\pi(a|s)$ in the context of RL is a probability distribution over the set of actions $a \in A$ at a given state $s \in S$. At each time step, $t$, the agent is in a particular state, say $s_t$. It selects an action $a_t$ according to $\pi(\cdot|s_t)$ and moves to a next state $s_{t+1}$ with probability $p(s_{t+1}|s_t, a_t)$. It also receives a reward $r(s_{t+1}, a_t, s_t)$ for this transition. The process then repeats from $s_{t+1}$. The agent's goal is to find a policy $\pi^{*}$ that maximizes the expected discounted future reward from every state in $S$. Next we introduce the notion of supervisory control which trims the set of candidate policies to only those that do not violate the safety specification. 

The set of actions ($\Sigma$) is partitioned into the set of \textit{controllable} actions ($\Sigma_c$) and \textit{uncontrollable} actions ($\Sigma_u$). More specifically, $\Sigma_c$ (resp. $\Sigma_u$) denotes the set of actions that can (resp. cannot) be disabled by the \textit{supervisor}. Formally, a supervisor $\mathcal{S}: L(\mathcal{M}) \rightarrow 2^{\Sigma}$ is a function from the language generated by $\mathcal{M}$ to the power set of $\Sigma$. We use $\mathcal{S}/\mathcal{M}$ to denote the supervised MDP (SMDP) in which supervisor $\mathcal{S}$ is controlling the MDP $\mathcal{M}$. Note that the supervisor only disables an action and it does not choose which action should be taken at any time instant. That (optimal) choice is determined by $\pi(a|s)$. The uncontrollable actions are always enabled by the supervisor.


We are interested in evaluating an optimal policy $\pi^*$ that satisfies the constraints specified in terms of a set of sequence of actions. The first question that is of interest to us is if such a policy exists. We can directly use the {\it controllability condition} from the supervisory control theory literature to answer this question \cite{ramadge1989control, cassandras2009introduction}:
\begin{theorem}\label{ct}
Let $K$ be a desired language specification over the alphabet $\Sigma$ denoting the safety constraints for the MDP $\mathcal{M}$. There is a supervisor $\mathcal{S}$ such that $L(\mathcal{S}/\mathcal{M}) = \overline{K}$ if and only if $\overline{K}\Sigma_u \cap L(\mathcal{M}) \subseteq \overline{K}$.
\end{theorem}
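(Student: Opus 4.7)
The plan is to prove the two directions of the equivalence separately, treating this as the standard Ramadge--Wonham controllability result adapted to the MDP-as-language-generator setting introduced earlier.

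\textbf{Necessity.} First I would assume that a supervisor $\mathcal{S}$ exists with $L(\mathcal{S}/\mathcal{M}) = \overline{K}$, and pick an arbitrary string in $\overline{K}\Sigma_u \cap L(\mathcal{M})$, writing it as $s\sigma$ with $s \in \overline{K}$ and $\sigma \in \Sigma_u$. Since $s \in \overline{K} = L(\mathcal{S}/\mathcal{M})$, the supervised system reaches the configuration after $s$; since $\sigma$ is uncontrollable it cannot be disabled by $\mathcal{S}$; and since $s\sigma \in L(\mathcal{M})$ the underlying MDP permits $\sigma$ from there. Hence $s\sigma \in L(\mathcal{S}/\mathcal{M}) = \overline{K}$, giving the required inclusion. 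This direction is short and only uses the definition of the supervisor (in particular that uncontrollable actions are always enabled).

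\textbf{Sufficiency.} For the harder direction, I would exhibit an explicit supervisor. Define, for every $s \in L(\mathcal{M})$,
\begin{equation*}
    \mathcal{S}(s) \;=\; \Sigma_u \;\cup\; \{\sigma \in \Sigma_c : s\sigma \in \overline{K}\}.
\end{equation*}
This is well-defined because $\Sigma_u \subseteq \mathcal{S}(s)$ always, as required. Then I would show $L(\mathcal{S}/\mathcal{M}) = \overline{K}$ by induction on string length, using prefix-closure of $\overline{K}$ throughout.

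\textbf{Inductive argument.} The base case is trivial since $\epsilon$ belongs to both sides. For the inductive step, suppose $s \in L(\mathcal{S}/\mathcal{M})$ iff $s \in \overline{K}$, and consider $s\sigma$. For the ``$\subseteq$'' inclusion: if $s\sigma \in L(\mathcal{S}/\mathcal{M})$, then $s \in \overline{K}$ by induction, $s\sigma \in L(\mathcal{M})$, and $\sigma \in \mathcal{S}(s)$; if $\sigma \in \Sigma_c$ the definition of $\mathcal{S}$ directly gives $s\sigma \in \overline{K}$, while if $\sigma \in \Sigma_u$ then $s\sigma \in \overline{K}\Sigma_u \cap L(\mathcal{M})$, and the hypothesis $\overline{K}\Sigma_u \cap L(\mathcal{M}) \subseteq \overline{K}$ delivers $s\sigma \in \overline{K}$. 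For the ``$\supseteq$'' inclusion: if $s\sigma \in \overline{K}$, then $s \in \overline{K}$ by prefix-closure, so $s \in L(\mathcal{S}/\mathcal{M})$ by induction; moreover $s\sigma \in L(\mathcal{M})$ (assuming the usual $\overline{K} \subseteq L(\mathcal{M})$), and $\sigma \in \mathcal{S}(s)$ either because $\sigma \in \Sigma_u$ or, if $\sigma \in \Sigma_c$, by the very definition of $\mathcal{S}(s)$. Hence $s\sigma \in L(\mathcal{S}/\mathcal{M})$.

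\textbf{Main obstacle.} The only subtle point is the uncontrollable case in the inductive step: that is precisely where the controllability condition is needed, and where one sees why it is both necessary and sufficient. I would also take care, when invoking prefix-closure and the identity $\overline{K} \subseteq L(\mathcal{M})$, to state these as background assumptions consistent with the paper's MDP-as-language-generator interpretation, so that the proof remains self-contained.
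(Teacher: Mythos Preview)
Your argument is correct and is precisely the standard Ramadge--Wonham proof; the paper itself does not give a proof but simply cites Section~3.4.1 of Cassandras and Lafortune, whose argument is essentially the one you wrote out (define the supervisor by $\mathcal{S}(s)=\Sigma_u\cup\{\sigma\in\Sigma_c: s\sigma\in\overline{K}\}$ and verify $L(\mathcal{S}/\mathcal{M})=\overline{K}$ by induction on string length). The only assumptions to make explicit, as you already noted, are $K\neq\emptyset$ (so that $\epsilon\in\overline{K}$ for the base case) and $\overline{K}\subseteq L(\mathcal{M})$.
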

\begin{proof}
See Section 3.4.1 of \cite{cassandras2009introduction}.
\end{proof}
The above condition states that there is a supervisor that enforces the desired specificaiton $K$ if and only if the occurrence of an uncontrollable action in $\mathcal{M}$ from a prefix of $K$ results in a string that is also a prefix of $K$. For the gridworld example in Fig. \ref{fig:my_label1}, the uncontrollable action $\phi$ resulted in a string from which it can generate a string in which item \ref{x} of the specification was not satisfied. Therefore, it violated the controllability condition.

If the condition in Theorem \ref{ct} is not satisfied, then there does not exist a supervisor that can enforce the given specification. In such a case, the best that the supervisor can do is enforce the \emph{supremal controllable sublanguage} of $K$, which is the union of all controllable sublanguages of $K$ \cite{wonham1987supremal}: 
\begin{align}\label{contr}
    K^{\uparrow} = \bigcup_i\ \{K_i: (K_i \subseteq K) \wedge (\overline{K}_i\Sigma_u \cap L(G) \subseteq \overline{K}_i)\}. \nonumber
\end{align}
Without going into the formal analysis, the supremal controllable sublanguage (the modified specification) for the gridworld example would involve $R_1$ not picking an item unless $B$ is empty.

In this paper, we will assume, in different ways, that the given specification can be modeled as a finite automaton and that it is controllable. 
An automaton is represented by a directed graph (state transition diagram) whose nodes correspond to the states of the automaton, with an edge from $q$ to $\widehat{q}$ labeled $\sigma$ for each triple $(\sigma, q, \widehat{q})$ such that $\widehat{q}=\delta(\sigma, q)$. 

We use the concept of reward machines to handle the state constraints\cite{icarte2018using}. Given a set of propositional symbols $\mathcal{P}$, a set of (environment) states $S$, and a set of actions $A$, a \textit{Reward Machine (RM)} is a tuple $\mathcal{R}_{\mathcal{P}SA} = \langle U, u_0, \delta_u, \delta_r \rangle$ where $U$ is a finite set of states, $u_0 \in U$ is an initial state, $\delta_u : U \times 2^{\mathcal{P}} \rightarrow U$ is the state-transition function and $\delta_r :U\times U \rightarrow  [S\times A\times S \rightarrow \mathbb{R}]$ is the reward-transition function. An MDP with a Reward Machine (MDPRM) is a tuple $T = \langle S, A, p, \gamma, \mathcal{P}, L, U, u_0,\delta_u,\delta_r \rangle$, where $S, A, p$, and $\gamma$ are defined as in an MDP, $\mathcal{P}$ is a set of propositional symbols, $L$ is a labelling function $L : S \rightarrow 2^{\mathcal{P}}$, and $U, u_0, \delta_u,$ and $\delta_r$ are defined as in an RM. The operation of an MDPRM is as follows. At every decision epoch of $\mathcal{M}$, there is a transition in $\mathcal{R}_{\mathcal{P}SA}$ also. If the current state of $\mathcal{R}_{\mathcal{P}SA}$ is $u$ and an action $a$ is taken from $s$ in $\mathcal{M}$, then $\delta_u(a, u)$ is the next state of $\mathcal{R}_{\mathcal{P}SA}$ and it outputs a reward function $\delta_r(u, \delta_u(a, u))$ which determines the reward in $\mathcal{M}$ when action $a$ is taken from $s$. The main idea in handling state constraints using RMs is to associate barrier rewards with state sequences that violate the desired specification. Reward machines can be used to model any Markovian reward function. They can also used to express any non-Markovian reward function as long as the state history $S^*$ on which the reward depends can be distinguished by different elements of a finite set of regular expressions over $\mathcal{P}$. 

\section{Reinforcement Learning with Action-Constraints}\label{aa}
Consider an MDP $\mathcal{M} = (S, A, \Sigma, L, r, p, \gamma)$. The action-constraints can be given in terms of the sequence of actions that are safe to perform or by sequences that are unsafe. We propose methods for supervisor synthesis for both cases in this section. 

Consider an automaton $G_s' = (Q_s', \Sigma, \delta_s', q_{0s}, \emptyset)$ that is the model of a sequence of actions that is safe with an empty marked set of states. We construct an automaton $G_s = (Q_s, \Sigma, \delta_s, q_{0s}, \{s_a\})$ from $G_s'$ by appropriately adding a (marked) state $s_a$ to flag the occurrence of an action violating the specification. The formal construction is as follows:
\begin{enumerate}
    \item $G_s$ has one additional state: $Q_s = Q_s' \cup \{s_a\}$, $Q_m=\{s_a\}$.
    \item The transition function for $G_s$ is identical to that of $G_s'$ for the state-action pairs for which $\delta_s'$ is defined.
    $\forall s\in Q_s', \forall \sigma \in \Gamma(s)$: $\delta_s(\sigma, s) = \delta_s'(\sigma, s)$.
    \item \label{item} For the state-action pairs for which $\delta_s'$ is not defined, we have: $\forall s\in Q_s', \forall \sigma \in (\Sigma - \Gamma(s))$: $\delta_s(\sigma, s) = s_a$.
    \item If $G_s$ reaches $s_a$, it stays there: $\forall \sigma \in \Sigma$: $\delta_s(\sigma, s_a) = s_a$.
\end{enumerate}
At every decision epoch of $\mathcal{M}$, there is a transition in $G_s$ also. If the current state of $G_s$ is $s$ and an action $a$ is taken in $\mathcal{M}$, then $\delta_s(a, s)$ is the next state of $G_s$. Now, if an action $\omega$ is taken in $\mathcal{M}$ for which $\delta_s'(\omega, s)$ is not defined, then it means that it violates the safety constraint. If $\delta_s'(\omega, s)$ is not defined for $\omega \in \Sigma$ and $s \in Q_s'$, then as per item \ref{item} in the above construction, the state of $G_s$ will transition to $s_a$. The state $s_a$ of $G_s$ indicates that an action constraint has been violated. A supervisor for this case will simply disable actions that lead to a transition to the state $s_a$. If there is an uncontrollable action that can result in a transition to $s_a$, then it means that the specification does not satisfy the controllability condition of theorem \ref{ct} and there does not exist a policy that can enforce the given specification. Automaton $G_s$ in Fig. \ref{F} is an example in which $d_2d_3p_1p_2p_3$ is the sequence of actions that are to be performed when an uncontrollable action $d_u$ happens. 

Next, we consider the case in which the constraints are given in terms of the sequence of actions that are unsafe\footnote{One way of handling this situation would be to evaluate the sequence of actions that are safe and then proceed with the earlier method (we assume that the set of safe sequences can be represented by an automaton).}. Recall that $L_m(\mathcal{G}) \subseteq L(\mathcal{G})$ denotes the marked language generated by an automaton $\mathcal{G}$. We assume that the set of strings of actions that are unsafe $S_{un}$ are given as an automaton $H_1 = (Q_{h}',\Sigma_h', \delta_h', q_{0h}', Q_{mh'})$ for which $L_m(H_1) = S_{un}$. We then modify $H_1$ to obtain an automaton $H=(Q_{h},\Sigma_h, \delta_h, q_{0h}, Q_{mh})$ as follows: 
\begin{enumerate}
    \item $Q_h = Q_h'$, $\Sigma_h  = \Sigma$,  $q_{0h} = q_{0h}'$, $Q_{mh}  = Q_{mh}'$.
    \item $\forall q \in Q_h'$, $\forall \sigma \in \Gamma_h'(q)$, $\delta_h(\sigma, q)  = \delta_h'(\sigma, q)$.
    \item $\forall q \in Q_h'$, $\forall \sigma \in \Sigma - \Gamma_h'(q)$, $\delta_h(\sigma, q) = q_{0h}$.
\end{enumerate}
That is, $H$ can be constructed from $H_1$ by adding additional outgoing arcs from every state in $H_1$ to the initial state, corresponding to actions in $\Sigma$ that do not already appear as an outgoing arc in $H_1$. This modification ensures that for every action in $\mathcal{M}$, a corresponding transition is defined in $H$. The operation of $H$ is the same as that of $G_s$. At every decision epoch of $\mathcal{M}$, there is a transition in $H$. If the current state of $H$ is $s$ and an action $a$ is taken in $\mathcal{M}$, then $\delta_H(a, s)$ is the next state of $H$, where $\delta_H$ is the transition function of $H$. 
For simplicity, we label every $q \in Q_{mh}$ by $s_a$. The supervisor will disable every controllable action which results in a next state that is marked ($s_a$). Suppose $\mathcal{M}$ has generated a string $\sigma$ that has not resulted in a transition to $s_a$ in $H$, then it means that $\sigma$ is a prefix of an admissible string. Following $\sigma$, if there is an uncontrollable action that results in a transition to $s_a$, then it means that the specification does not satisfy the controllability condition. 
The automaton $H$ in Fig. \ref{H1} shows an automaton which models the specification in which two consecutive lefts or rights ($\{ll, rr, lrr, rlrll,\hdots\}$) are deemed unsafe. 

\textit{Remark:} We have modeled the sequence of actions that are unsafe by ``marking'' some states as unsafe. The marked states have a completely different interpretation in supervisory control theory (SCT) where they are often associated with desirable features--- like signifying the end of an execution with the automata ``recognizing'' the string to be in its language. In fact, in SCT, if $\overline{L}_m(G) = L(G)$, then we say that $G$ is \textit{non-blocking}, which means that the system can always reach a target (marked) state from any state in its reachable set. Deadlock freedom is an instance of the non-blocking property. We will need some simple modifications in the method discussed above if we have to tackle both the interpretations simultaneously.

Algorithm \ref{Algo_1} shows the psuedocode for Supervised Q-learning for non-Markovian action and state constraints. It receives as input the automata $G_s'$ and $H_1$ describing the action constraints, the set of propositional symbols $\mathcal{P}$, the labelling function $L$, the discount factor $\gamma$, and the reward machines $\mathcal{R}$ over $\mathcal{P}$. The goal is to learn an optimal policy given the state and action constraints.

\begin{algorithm}
\caption{Supervised Q-learning for non-Markovian action and state constraints}
\label{Algo_1}
\begin{algorithmic}[1]
\STATE Input: ${G_s'}, H_1, \mathcal{P}, L, \gamma, \mathcal{R} = \langle U, \delta_u, \delta_r, u_0 \rangle$
\STATE Construct $G_s$ and $H$ from $G_s'$ and $H_1$ respectively
\STATE \label{init} $\widetilde{Q} \leftarrow$ InitializeQValueFunction()
\FOR{$l= 0$ to num\_episodes} \label{numep}
    \STATE {$u_j \leftarrow u_0; q_s \leftarrow q_{0s}; q_h \leftarrow q_{0h}$; $s \leftarrow$ EnvInitialState();}
    \FOR {$t=0$ to length\_episode}\label{lenep}
        \IF {EnvDeadEnd($s$)}
            \STATE {\textbf{break}}
        \ENDIF
    \STATE{$a \leftarrow$ GetActionEpsilonGreedy($\widetilde{q}_{q_s, q_h ,u_j}, s$) such that $(\delta_s(a, q_s) \neq s_a) \wedge (\delta_h(a, q_h) \neq s_a)$}\label{sup}
    \STATE $s' \leftarrow$ EnvExecuteAction$(s, a)$ \label{exe}
    \STATE $u_k \leftarrow \delta_u(u_j, L(s'))$  \label{RMtrans}
    \STATE $r \leftarrow \delta_r(u_j, u_k)$ \label{RMreward}
    \STATE $q_m \leftarrow \delta_s(q_s, a)$; $q_n \leftarrow \delta_h(q_h, a)$
    \IF {EnvDeadEnd($s'$)} \label{UpStart}
        \STATE {$\widetilde{q}_{q_s, q_h ,u_j}(s, a) = \widetilde{q}_{q_s, q_h ,u_j}(s, a) + {\alpha} r(s, a, s')$}
    \ELSE 
    \STATE{$\widetilde{q}_{q_s, q_h ,u_j}(s, a) = \widetilde{q}_{q_s, q_h ,u_j}(s, a)$ $+$ ${\alpha}$ $[r(s, a, s')$ $+$ $\gamma \max_{a'}\widetilde{q}_{q_m, q_n, u_k}(s', a') - \widetilde{q}_{q_s, q_h ,u_j}(s, a)]$}
    \ENDIF \label{UpEnd}
    \STATE $q_s \leftarrow \delta_s(q_s, a)$; $q_h \leftarrow \delta_h(q_h, a)$ 
    \STATE $u_j \leftarrow \delta_u(u_j, L(s'))$; $s \leftarrow s'$
        \ENDFOR
\ENDFOR
\end{algorithmic}
\end{algorithm}

The algorithm learns one q-value function per state of the automata and reward machine. That is, it will learn $|Q_s|\times|Q_h| \times |U|$ many q-value functions in total, where $|\cdot|$ denotes the size of the set argument. These q-functions are stored in $\widetilde{Q}$, and $\widetilde{q}_{q_s, q_h, u_j} \in \widetilde{Q}$ corresponds to the q-value function for states $q_s \in Q_s$ and $q_h \in Q_h$ of the two automata, and $u_j \in U$ of the reward machine.

After initializing $\widetilde{Q}$ in step \ref{init}, the algorithm has 2 nested loops: one for the number of episodes that we are running (step \ref{numep}), and the other for the length of each episode (step \ref{lenep}). Step \ref{sup} of the algorithm selects an action $a$ under supervision wherein only those actions that do not result in the next state $s_a$ in either of the two automata are considered for selection by epsilon-greedy method. Thereafter, the agent executes the action (Step \ref{exe}), evaluates the reward function through the transition in the reward machine (Steps \ref{RMtrans} and \ref{RMreward}) and then updates the q-value function (Steps \ref{UpStart}-\ref{UpEnd}) using the standard q-learning rule. Note that the maximization step is over $\widetilde{q}_{q_m, q_n, u_k}$ since those q-values would be used for the selection of $a'$ as $G_s, H_1$ and $\mathcal{R}$ would have transitioned. Action $a'$ is to be picked so that $s_a$ is not reached in $G_s$ and $H_1$.


\section{Illustrative Example}\label{example}
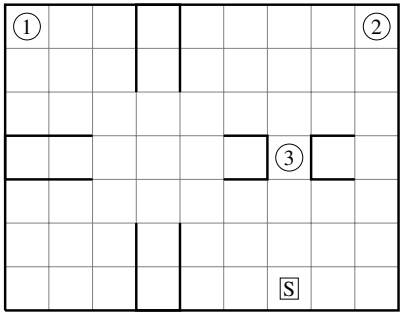
\begin{figure}[htbp]
    \centering
    \resizebox{.3\textwidth}{!}{
    \begin{tikzpicture}
    \draw[gray, very thin] (0, 0) grid (9, 7);
    
    \draw [ultra thick] (0,0) -- (9,0) -- (9,7) -- (0,7) -- (0,0);
    \draw [ultra thick] (3, 2) -- (3,0) -- (4,0) -- (4,2);
    \draw [ultra thick] (2, 3) -- (0,3) -- (0,4) -- (2, 4);
    \draw [ultra thick] (3, 5) -- (3,7) -- (4,7) -- (4,5);
    
    \draw [ultra thick] (5, 3) -- (6, 3) -- (6, 4) -- (5, 4);
    \draw [ultra thick] (8, 3) -- (7, 3) -- (7, 4) -- (8, 4);
    \node[shape=circle,draw,inner sep=2pt] at (0.5, 6.5) () {\Large 1};
    \node[shape=rectangle,draw,inner sep=2pt] at (6.5, 0.5) () {\Large S};
    \node[shape=circle,draw,inner sep=2pt] at (8.5, 6.5) () {\Large 2};

    \node[shape=circle,draw,inner sep=2pt] at (6.5, 3.5) () {\Large 3};

    \end{tikzpicture}}
    \caption{A pick-up and delivery grid world}
    \label{fig:my_label}
\end{figure}
\noindent \textbf{Setting:} Consider the pick-up and delivery grid world in Fig. \ref{fig:my_label}. The agent starts from the position $S$, picks up items labeled $1, 2$ and $3$, and returns back to $S$. The agent unloads the items in a last-in-first-out order and the unloading sequence must be $3-2-1$. Therefore, the nominal pick-up of the items is in sequence from $1$ to $3$. The objective of the agent is to finish this process in minimum number of steps.

\noindent \textbf{State and action space:} The state of the agent is composed of its position in the grid, its orientation and an indicator of the items that the agent has picked up. The position of the agent in the grid can be denoted by the coordinate of the respective pixel it occupies, with the bottom-leftmost corner of the grid denoting the origin. We can use a set of symbols $\{\uparrow, \downarrow, \leftarrow, \rightarrow\}$ to describe its orientation and a 3-bit binary variable to denote the items that the agent has picked up. The agent's movement in the grid is denoted by four actions $l, r, f$ and $b$ denoting the movement in four directions: left, right, forward and backward respectively. The actions $r$ and $l$ change the orientation of the agent by $90$ degrees clockwise and anticlockwise respectively. In addition, they also lead to a change in position by one pixel forward in the direction of the new orientation. For example, if $l$ are $r$ are feasible actions from a position $(x, y, \uparrow)$,then:
\begin{align}
    (x, y, \uparrow) \xrightarrow{l} (x-1, y, \leftarrow), \nonumber \\
    (x, y, \uparrow) \xrightarrow{r} (x+1, y, \rightarrow). \nonumber
\end{align}
The actions $f$ and $b$ do not change the orientation and only result in a change of position by one pixel forward or backward respectively relative to the current orientation of the agent. If $f$ are $b$ are feasible actions from a position $(x, y, \uparrow)$, then we have:
\begin{align}
    (x, y, \uparrow) \xrightarrow{f} (x, y+1, \uparrow), \nonumber \\
    (x, y, \uparrow) \xrightarrow{b} (x, y-1, \uparrow). \nonumber
\end{align}
We use $d_1$, $d_2$ and $d_3$ ($p_1, p_2$ and $p_3$) for actions denoting dropping (respectively picking) of the respective items by the agent. We assume that the agent cannot reliably carry all three items together because of which it can accidentally drop an item when it is carrying all three of them. For simplicity, we assume that it can accidentally drop only item 1. We model this by associating an uncontrollable action $d_u$. The illustration can easily be extended to account for dropping of either of the items.  

\noindent \textbf{Action Constraints} 
\begin{enumerate}
    \item The agent cannot make a $U$-turn. We interpret it has two consecutive rights or lefts. That is, the set $\{ll, rr\}$ denotes the illegal substrings of the string of actions. The automaton $H$, with initial state $0$, in Fig. \ref{H1} describes the illegal sequence of actions corresponding to the set of strings $\{ll, rr\}$. For ease of exposition, the actions $d_i$ and $p_i$ are not shown in $H$ and they can be added with a self loop around every state. 
    
    \item In order for the agent to satisfy the constraint on the unloading sequence, if an agent accidentally drops the first item then it must also immediately drop the other two and pick all three in sequence again. That is, $d_u$ must be followed by the string $d_2d_3p_1p_2p_3$. 
    Fig. \ref{F} shows the automaton $G_s$ that constrains the agent to take the action sequence $d_2d_3p_1p_2p_3$ when an uncontrollable action $d_u$ happens\footnote{For the case when the agent can drop either of the other two items, we can introduce two more uncontrollable actions $d_{u2}$ and $d_{u3}$ with remedial strings $d_1d_3p_1p_2p_3$ and $d_1d_2p_1p_2p_3$ respectively.}. 
\end{enumerate}

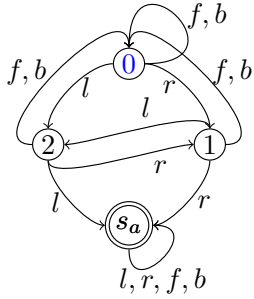
\begin{figure}[htbp]
    \centering
    \resizebox{0.25\textwidth}{!}{
    \begin{tikzpicture}
         \node[shape=circle,draw,inner sep=1pt](0) at (0,2) {\textcolor{blue}{$0$}};
         \node[shape=circle,draw,inner sep=1pt](1) at (1,1) {$1$};
         \node[shape=circle,draw,inner sep=1pt](2) at (-1,1) {$2$};
         \node[shape=circle,draw,inner sep=1pt]() at (0,0) {$s_a$};
         \node[shape=circle,draw,inner sep=2pt](sa) at (0,0) {$s_a$};
         
         \draw[->] (0) .. controls +(right:15mm) and +(up:15mm) .. node[right]{$f, b$}(0);
        \draw[->] (sa) .. controls +(down:10mm) and +(right:10mm) .. node[below]{$l, r, f, b$}(sa);
         \draw[->] (0) .. controls +(right:5mm) and +(up:5mm) .. node[left]{$r$}(1);
         \draw[->] (0).. controls +(left:5mm) and +(up:5mm) .. node[right]{$l$}(2);
         \draw[->] (1) .. controls +(down:5mm) and +(right:5mm) .. node[right]{$r$}(sa);
         \draw[->] (2).. controls +(down:5mm) and +(left:5mm) .. node[left]{$l$}(sa);
         
         \draw[->] (1).. controls +(up:5mm) and +(right:5mm) .. node[above]{$l$}(2);
         \draw[->] (2).. controls +(down:5mm) and +(left:5mm) .. node[below, near end]{$r$}(1);
         
         \draw[->] (2).. controls +(left:10mm) and +(up:10mm).. node[left]{$f,b$}(0);
         \draw[->] (1).. controls +(right:10mm) and +(up:10mm).. node[right]{$f,b$}(0);
         
     \end{tikzpicture}}
    \caption{An automaton $H$ with initial state $0$ that accepts strings equivalent to a U-turn by the agent; for example the strings \{$ll, rr, lrfbrr, \hdots$\}.}
    \label{H1}
\end{figure}
Fig. \ref{RM} shows the reward machine for imposing the specification of sequential pickup of items in which $\mathcal{P} = \{0, 1, \fbox{S}\}$. The labeling function generates a 3-bit binary number corresponding to the items that the agent has picked up. It uses the symbol \fbox{S} to indicate whether the agent reached location \fbox{S} in the grid. The RM starts at state $u_0$. It transitions to state $u_1$ if the items are not picked up in the desired sequence and thereafter obtains a reward of $-20$ at every step. The reward for every step increases from $-10$ to $-8$ (from $-8$ to $-6$) after the agent picks up item $1$ (respectively, item $2$). It further increases to $-1$ after the agent picks up all three items. Note that the RM only handles the sequential pickup initially. The uncontrollable drop of item $1$ after the agent picks all three items is handled by the automaton $G_s$. Once all three items are picked up in sequence, the RM stays at $u_4$ until the agent reaches \fbox{S}.\\

\noindent \textbf{Simulation Results}
The simulation was run for 500,000 iterations, where each iteration consisted of one episode of a maximum of 60 steps. The hyperparameters were set as $\alpha=0.1, \epsilon=0.25$ and $ \gamma=0.9$. Additionally, $\epsilon$ was decayed by 5\% after every period of 10,000 iterations. The single-step reward is defined as in Fig. \ref{RM} in accordance to the items picked and their order. Fig. \ref{score} shows the average score, i.e., the average accumulated reward over an episode, as training progressed. Fig. \ref{taskcompletion} shows the average percentage of times when the task was completed. All data shown are averages over intervals of 10,000 iterations during training. At the end of training, $\epsilon$ achieves a value of $0.019$. This $\epsilon$-greedy policy completes the task around 95\% of the time as shown in Fig. \ref{taskcompletion}. Upon inference, i.e., upon setting $\epsilon=0$ post training, the agent completes the task optimally 100\% of the time.
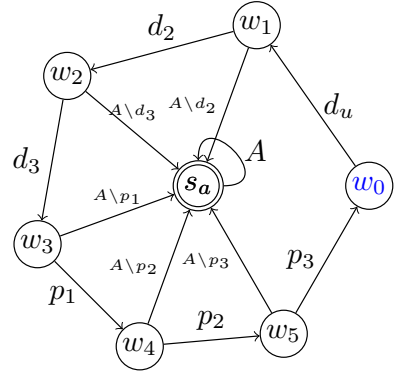
\begin{figure}
    \centering
    \resizebox{0.3\textwidth}{!}{
    \begin{tikzpicture}
    \node[shape=circle,draw,inner sep=1pt](sa) at (0,0) {$s_a$};
    \node[shape=circle,draw,inner sep=2pt](sa) at (0,0) {$s_a$};
    \path (sa) ++(70:2) node (1) [shape=circle,draw,inner sep=1pt] {$w_1$};
    \path (sa) ++(140:2) node (2) [shape=circle,draw,inner sep=1pt] {$w_2$};
    \path (sa) ++(200:2) node (3) [shape=circle,draw,inner sep=1pt] {$w_3$};
    \path (sa) ++(250:2) node (4) [shape=circle,draw,inner sep=1pt] {$w_4$};
    \path (sa) ++(300:2) node (5) [shape=circle,draw,inner sep=1pt] {$w_5$};
    \path (sa) ++(360:2) node (0) [shape=circle,draw,inner sep=1pt] {\textcolor{blue}{$w_0$}};

    \draw[->] (0)-- node[right]{$d_u$}(1);
    \draw[->] (1)-- node[above]{$d_2$}(2);
    \draw[->] (2)-- node[left]{$d_3$}(3);
    \draw[->] (3)-- node[left]{$p_1$}(4);
    \draw[->] (4)-- node[above]{$p_2$}(5);
    \draw[->] (5)-- node[left]{$p_3$}(0);

    \draw[->] (1)-- node[left]{\tiny $A\backslash d_2$}(sa);
    \draw[->] (2)-- node[above]{\tiny $A\backslash d_3$}(sa);
    \draw[->] (3)-- node[above]{\tiny $A\backslash p_1$}(sa);
    \draw[->] (4)-- node[left]{\tiny $A\backslash p_2$}(sa);
    \draw[->] (5)-- node[left]{\tiny $A\backslash p_3$}(sa);
    \draw[->] (sa) .. controls +(right:10mm) and +(up:10mm) .. node[right]{$A$}(sa);
\end{tikzpicture}}
    \caption{An automaton $G_s$ with initial state $w_0$ describing the sequence of safe actions when an uncontrollable action $d_u$ happens.}
    \label{F}
\end{figure}

\begin{figure}
    \centering
    \resizebox{0.45\textwidth}{!}{
    \begin{tikzpicture}
    \path (sa) ++(0:2) node (0) [shape=circle, draw, inner sep=1pt] {\textcolor{blue}{$u_0$}};
    \path (sa) ++(70:2) node (1) [shape=circle,draw,inner sep=1pt] {$u_1$};
    \path (sa) ++(140:2) node (2) [shape=circle,draw,inner sep=1pt] {$u_2$};
    \path (sa) ++(210:2) node (3) [shape=circle,draw,inner sep=1pt] {$u_3$};
    \path (sa) ++(280:2) node (4) [shape=circle,draw,inner sep=1pt] {$u_4$};
    \path (sa) ++(330:4) node (5) [shape=circle,draw,inner sep=1pt] {{$u_5$}};
    
    \draw[->] (0).. controls +(right:15mm) and +(down:15mm) .. node[right, near start]{$\langle 000, -10 \rangle$}(0);
    \draw[->] (1).. controls +(up:15mm) and +(right:15mm) .. node[right, near end]{$\langle true, -20 \rangle$}(1);
    \draw[->] (2).. controls +(up:15mm) and +(left:15mm) .. node[left, near end]{$\langle 100, -8 \rangle$}(2);
    \draw[->] (3).. controls +(left:15mm) and +(down:15mm) .. node[left, near end]{$\langle 110, -6 \rangle$}(3);
    \draw[->] (4).. controls +(left:15mm) and +(down:15mm) .. node[left, near end]{$\langle 111 \wedge (\neg \fbox{S}), -1 \rangle$}(4);

    \draw[->] (0)-- node[right]{$\langle 010 \vee 001, -20 \rangle$}(1);
    \draw[->] (0)-- node[above]{$\langle 100, -8 \rangle$}(2);
    \draw[->] (2)-- node[above]{$\langle 101, -20 \rangle$}(1);
    \draw[->] (2)-- node[left]{$\langle 110, -6 \rangle$}(3);
    \draw[->] (3)-- node[above, near end]{$\langle 111, -1 \rangle$}(4);
    \draw[->] (4)-- node[below]{$\langle 111 \wedge \fbox{S}, 0 \rangle$}(5);
    \draw[->] (5).. controls +(right:15mm) and +(down:15mm) .. node[below, near end]{$\langle true, 0 \rangle$}(5);
\end{tikzpicture}}
    \caption{Reward Machine for pick up and delivery grid world example.}
    \label{RM}
\end{figure}
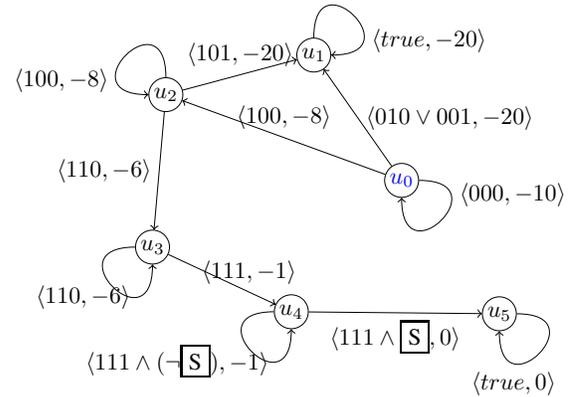




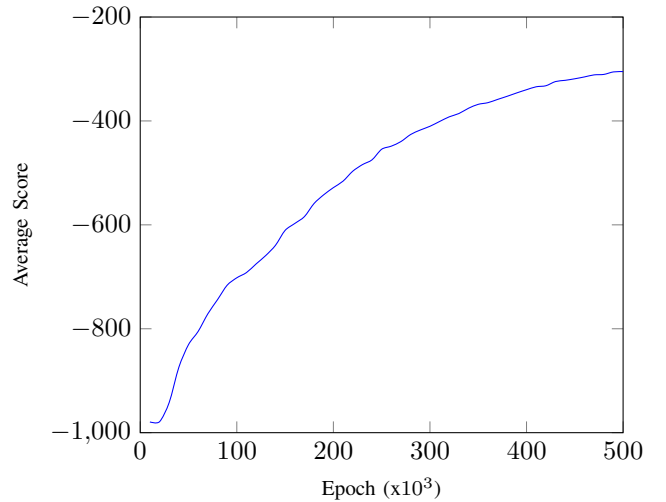
\begin{figure}
    \centering
    \begin{tikzpicture}
    \begin{axis}[
    xmin = 0, xmax = 500,
    ymin = -1000, ymax = -200,
    width = 0.45\textwidth,
    height = 0.4\textwidth,
    xlabel = {\footnotesize Epoch (x$10^3$)},
    ylabel = {\footnotesize Average Score},
    ]
    \addplot[smooth, blue] file[skip first] {score_trim_latex.dat};
    \end{axis}
    \end{tikzpicture}
    
    \caption{Average score achieved averaged across intervals of 10,000 iterations during training}
    \label{score}
\end{figure}

\begin{figure}
    \centering
    \begin{tikzpicture}
    \begin{axis}[
    xmin = 0, xmax = 500,
    ymin = 0, ymax = 100,
    width = 0.45\textwidth,
    height = 0.4\textwidth,
    xlabel = {\footnotesize Epoch (x$10^3$)},
    ylabel = {\footnotesize Average Task Completion \%},
    ]
    \addplot[smooth, blue] file[skip first] {tc_trim_latex.dat};
    \end{axis}
    \end{tikzpicture}
    
    \caption{Percentage of times task was completed averaged across intervals of 10,000 iterations during training}
    \label{taskcompletion}
\end{figure}
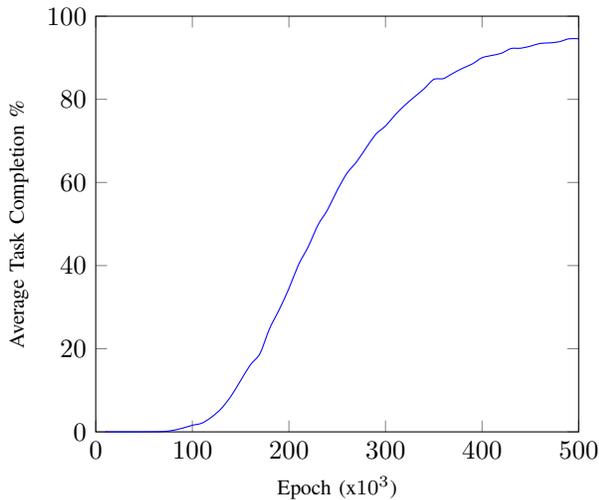


\section{Conclusion}\label{concl}
In this paper, we borrowed concepts from supervisory control theory of discrete event system to develop a method to learn optimal control policies for a finite-state Markov Decision Process in which (only) certain sequences of actions are deemed unsafe (respectively safe) while accounting for the possibility that a subset of actions might be uncontrollable. We assumed that the constraints can be modeled using a finite automaton and a natural future direction of research is to develop such methods for a more general class of constraints.

\bibliographystyle{IEEEtran}
\bibliography{ref}
\end{document}